\documentclass{article}

\usepackage{microtype}
\usepackage{graphicx}
\usepackage{subcaption}
\usepackage{booktabs} % for professional tables
\usepackage{subcaption}
\usepackage{cuted}

\usepackage{hyperref}

\usepackage[preprint]{icml2026}

\usepackage{amsmath}
\usepackage{amssymb}
\usepackage{mathtools}
\usepackage{amsthm}
\usepackage[most]{tcolorbox}

\usepackage{booktabs}
\usepackage{multirow}
\usepackage{siunitx}

\usepackage{xcolor}
\usepackage{colortbl}
\usepackage{balance}

\usepackage[capitalize,noabbrev]{cleveref}

\definecolor{myblue}{HTML}{E1F4FD}

\theoremstyle{plain}
\newtheorem{theorem}{Theorem}[section]

\theoremstyle{definition}
\newtheorem{definition}[theorem]{Definition}

\theoremstyle{remark}

\usepackage[textsize=tiny]{todonotes}

\icmltitlerunning{CoPE: Clipped RoPE as A Scalable Free Lunch for Long Context LLMs}

\begin{document}

\twocolumn[
  \icmltitle{CoPE: Clipped RoPE as A Scalable Free Lunch\\for Long Context LLMs}

  % It is OKAY to include author information, even for blind submissions: the
  % style file will automatically remove it for you unless you've provided
  % the [accepted] option to the icml2026 package.

  % List of affiliations: The first argument should be a (short) identifier you
  % will use later to specify author affiliations Academic affiliations
  % should list Department, University, City, Region, Country Industry
  % affiliations should list Company, City, Region, Country

  % You can specify symbols, otherwise they are numbered in order. Ideally, you
  % should not use this facility. Affiliations will be numbered in order of
  % appearance and this is the preferred way.
  \icmlsetsymbol{equal}{*}

  \begin{icmlauthorlist}
    \icmlauthor{Haoran Li}{cmu,jhu}
    \icmlauthor{Sucheng Ren}{jhu}
    \icmlauthor{Alan Yuille}{jhu}
    \icmlauthor{Feng Wang}{jhu}
  \end{icmlauthorlist}

  \icmlaffiliation{cmu}{Carnegie Mellon University}
  \icmlaffiliation{jhu}{Johns Hopkins University}
  
  \icmlcorrespondingauthor{Haoran Li}{haoranl4@cs.cmu.edu}
  \icmlcorrespondingauthor{Feng Wang}{fwang60@jh.edu}

  % You may provide any keywords that you find helpful for describing your
  % paper; these are used to populate the "keywords" metadata in the PDF but
  % will not be shown in the document
  \icmlkeywords{Machine Learning, ICML}
  
  \vskip 0.3in
]

% \begin{strip}
% \begin{center}
%     \centering
%     \includegraphics[width=0.95\linewidth]{figs/Fig1_1.pdf}
%     \captionof{figure}{\textbf{CoPE consistently outperforms RoPE} across diverse tasks, context lengths, and long context training strategies: \textbf{(a)} Qwen-ABF, which increases the base frequency of RoPE using the ABF technique \cite{xiong-etal-2024-effective}; \textbf{(b)} LLaMA-ABF, which only increases the base frequency of low-frequency components in RoPE; \textbf{(c)} DeepSeek-YaRN, which employs YaRN \cite{peng2024yarn} for context scaling.}
%     \label{fig:1}
% \end{center}
% \vspace{7pt}
% \end{strip}

% this must go after the closing bracket ] following \twocolumn[ ...

% This command actually creates the footnote in the first column listing the
% affiliations and the copyright notice. The command takes one argument, which
% is text to display at the start of the footnote. The \icmlEqualContribution
% command is standard text for equal contribution. Remove it (just {}) if you
% do not need this facility.

% Use ONE of the following lines. DO NOT remove the command.
% If you have no special notice, KEEP empty braces:
\printAffiliationsAndNotice{}  % no special notice (required even if empty)
% Or, if applicable, use the standard equal contribution text:
% \printAffiliationsAndNotice{\icmlEqualContribution}

\begin{abstract}
Rotary Positional Embedding (RoPE) is a key component of context scaling in Large Language Models (LLMs). While various methods have been proposed to adapt RoPE to longer contexts, their guiding principles generally fall into two categories: (1) \textit{out-of-distribution (OOD) mitigation}, which scales RoPE frequencies to accommodate unseen positions, and (2) \textit{Semantic Modeling}, which posits that the attention scores computed with RoPE should always prioritize semantically similar tokens. In this work, we unify these seemingly distinct objectives through a minimalist intervention, namely CoPE: soft clipping low-frequency components of RoPE. CoPE not only eliminates OOD outliers and refines semantic signals, but also prevents spectral leakage caused by hard clipping. Extensive experiments demonstrate that simply applying our soft clipping strategy to RoPE yields significant performance gains that scale up to 256k context length, validating our theoretical analysis and establishing CoPE as a new state-of-the-art for length generalization. Our code, data, and models are available at \url{https://github.com/hrlics/CoPE}.
\end{abstract}
% Abstracts must be a single paragraph, ideally between 4--6 sentences long.
% Gross violations will trigger corrections at the camera-ready phase.
% adjusts the base frequency of RoPE

% which simultaneously eliminates OOD outliers and refines semantic signals. 

\section{Introduction}
% 1. RoPE is important to length generalization in LLMs, explain how this is done -> basically extend the first sentence in abstract: "Rotary Positional Embedding (RoPE) is a key component of context scaling in Large Language Models (LLMs)." long context -> the central technique is to modify rope and how is it done?

Long context Large Language Models (LLMs) have become a cornerstone of critical domains such as coding agents \cite{jimenez2024swebench, Claudecode}, agentic memory \cite{yu2025memagent, mem0}, and long-horizon reasoning \cite{qiao2025webresearcher, zhou2025mem1, sinha2025illusion}. To achieve context scaling, a long context training stage is often required after initial pre-training, where the frequencies within Rotary Positional Embedding (RoPE) \cite{su2024roformer} are modified to fit the target context length, followed by continued training on long sequences.

% 2. discuss previous works on modifying rope for length generaliztion, explaion what is OOD mitigation and semantic modeling -> talk about the 2 guiding principles for modifying RoPE. The second sentence in abstract

While existing works have proposed various methods to adapt RoPE to longer contexts, their guiding principles generally fall into two categories: \textbf{\textit{OOD mitigation}} and \textbf{\textit{semantic modeling}}. Specifically, RoPE divides the query and key vectors into two-dimensional chunks, and rotates each chunk at a specific frequency. For low-frequency components that do not complete a full rotation during pre-training, extrapolating to unseen positions leads to severe OOD issues. Therefore, several \textbf{\textit{OOD mitigation}} strategies, including Position Interpolation (PI) \cite{chen2023extending}, NTK \cite{NTK}, YaRN \cite{peng2024yarn}, and LongRoPE \cite{ding2024longrope, shang2025longrope}, are introduced to scale the frequencies so that extended contexts are mapped back to the original position range. In contrast, another line of research is inspired by \textbf{\textit{semantic modeling}}, which posits that the attention scores computed with RoPE should always prioritize semantically similar tokens. Men et al. \yrcite{men2024base} show that the rotation matrix in attention would degrade the model’s ability to discriminate relevant tokens from irrelevant ones as the relative distance increases, motivating the use of a higher base frequency. The ABF technique \cite{xiong-etal-2024-effective} arrives at the same strategy, claiming that increasing the base frequency mitigates the long-term decay in RoPE and improves long context modeling.

\begin{figure}
    \centering
    \includegraphics[width=0.97\linewidth]{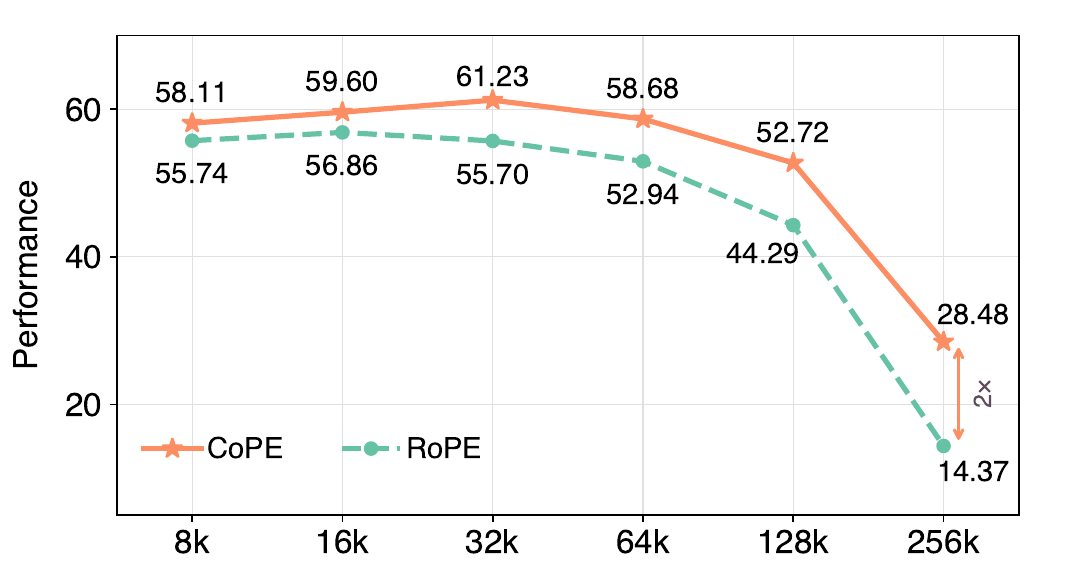}
    \caption{\textbf{Performance comparison between CoPE and RoPE.} With a simple \textit{soft} clipping strategy, CoPE effectively improves RoPE's performance both within the training range and during extrapolation. The training context length here is $64$k.}
    \label{fig:1}
\end{figure}

% 3. Despite,...  talk how we unifiy these through simple clipping -> the 3rd sentence in our abs "In this work, we theoretically unify these objectives through a minimalist intervention (CoPE): soft clipping lowfrequency components of RoPE, which simultaneously eliminates OOD outliers and refines semantic signals."

Despite their improved performance, these two lines of research are typically treated as tackling distinct aspects of long context modeling. However, we argue that they stem from the same underlying issue: \textit{\textbf{the suboptimal behavior of low-frequency components in the extrapolation regime.}} Through theoretical analysis of RoPE’s frequency spectrum, we show that low-frequency components simultaneously govern OOD behavior under extrapolation and the stability of semantic attention over long contexts. Motivated by this insight, we propose CoPE, a minimalist intervention that softly clips the low-frequency components of RoPE. This simple and effective strategy not only suppresses OOD outliers and refines semantic signals, but also prevents spectral leakage caused by hard clipping, providing a plug-and-play solution that can be seamlessly integrated into existing LLMs for better long context capability.

% 4. talk about the compatibility, experimental result on qwen-abf, llama-abf, and deepseek yarn

To validate the effectiveness and compatibility of CoPE, we conduct extensive experiments that align with the long context recipe used in Qwen3 \cite{yang2025qwen3}, i.e., employing the ABF technique \cite{xiong-etal-2024-effective} during long-context training and YaRN for test-time extrapolation. By simply replacing the standard RoPE with CoPE while keeping all other configurations unchanged, we observe consistent and significant improvements across diverse tasks and context lengths, as shown in Figure~\ref{fig:1}. Notably, at context lengths up to 256k tokens, CoPE achieves nearly \textbf{2$\times$} the performance of RoPE, while also maintaining superior performance within the training range. Together, our theoretical analysis and empirical results establish CoPE as a simple, general, and highly effective drop-in replacement for RoPE in long context LLMs.

% 5. summarize the contribution
Our main contributions can be summarized as follows:
\vspace{-8pt}
\begin{itemize}
    \setlength{\itemsep}{1pt}
    \item We provide a unified perspective on long context adaptations of RoPE, showing that both OOD mitigation and semantic modeling methods originate from the suboptimal behavior of low-frequency components in the extrapolation regime.
    \item Based on this insight, we propose \textbf{CoPE}, a minimalist and principled modification to RoPE that softly attenuates low-frequency components, eliminating OOD outliers, refining semantic signals, and preventing the spectral leakage induced by hard clipping.
    \item We conduct extensive experiments to demonstrate that CoPE is a \emph{simple} and \emph{scalable} drop-in replacement for RoPE, consistently improving performance across diverse tasks and context lengths up to $256$k.
\end{itemize}

\section{Preliminaries}
\textbf{Rotary Position Embedding (RoPE).} Transformer-based models \cite{vaswani2017attention} rely on Positional Encodings (PEs) to explicitly incorporate sequential information. Among various PEs, Rotary Position Embedding (RoPE) \cite{su2024roformer} has become the dominant choice in modern LLMs. Let $\mathbf{x}_i \in \mathbb{R}^{d}$ denote the $d$-dimensional token embedding of the $i$-th token in a sequence. Consider the $n$-th query vector $\mathbf{q}_n$ and the $m$-th key vector $\mathbf{k}_m$, RoPE partitions the dimensions into $d/2$ chunks, e.g., $\mathbf{q}_n=[\mathbf{q}_n^{(0)}; \mathbf{q}_n^{(1)};\dots;\mathbf{q}_n^{(d/2-1)}]$. Each chunk is assigned a unique rotation frequency $\theta_i=b^{-2i/d}, i \in \{0,1,\dots,d/2-1\}$, where $b$ is a pre-defined base frequency (typically set to $10,000$). The rotation is achieved through a rotation matrix $\mathbf{R}_{n} \in \mathbb{R}^{d \times d}$, which can be formulated as follows:
\begin{equation}
    \scriptsize
    \begin{pmatrix}
        \cos(n\theta_0) & -\sin(n\theta_0)  & \cdots & 0 & 0 \\
        \sin(n\theta_0)  &  \cos(n\theta_0)  & \cdots & 0 & 0 \\
        \vdots & \vdots & \ddots & \vdots & \vdots \\
        0 & 0 & \cdots & \cos(n\theta_{d/2-1})  & -\sin(n\theta_{d/2-1}) \\
        0 & 0 & \cdots & \sin(n\theta_{d/2-1}) &  \cos(n\theta_{d/2-1})
    \end{pmatrix}.
    \label{eq:RoPE}
\end{equation}
With this block-diagonal rotation matrix, the attention score\footnote{Here, we omit the softmax function and \(1/\sqrt{d}\) scaling in standard Transformer \citep{vaswani2017attention} for simplicity.} between $\mathbf{q}_n$ and $\mathbf{k}_m$ is computed as:
\begin{equation}
  A_{n,m}=(\mathbf{R}_n\mathbf{q}_n)^{\top}(\mathbf{R}_m\mathbf{k}_m)=\mathbf{q}_n^{\top}\mathbf{R}_{m-n}\mathbf{k}_m,
\label{eq:attention}
\end{equation}
where $(m-n)$ is the relative distance between $\mathbf{q}_n$ and $\mathbf{k}_m$.

\section{Analysis}
\label{sec:analysis}
% Long Context Adaptions of RoPE}
In this section, we conduct a comprehensive theoretical analysis of existing methods that adapt RoPE to longer contexts. We begin by highlighting the underlying guiding principles of prior methods, namely \textbf{\textit{OOD mitigation}} and \textbf{\textit{semantic modeling}}. We then show that these two seemingly distinct objectives both originate from the same root cause: the suboptimal behavior of low-frequency components in the extrapolation regime.

\begin{figure*}[t]
    \vspace{6pt}
    \centering
    \begin{subfigure}[b]{0.55\textwidth}
        \centering
        \includegraphics[width=\textwidth]{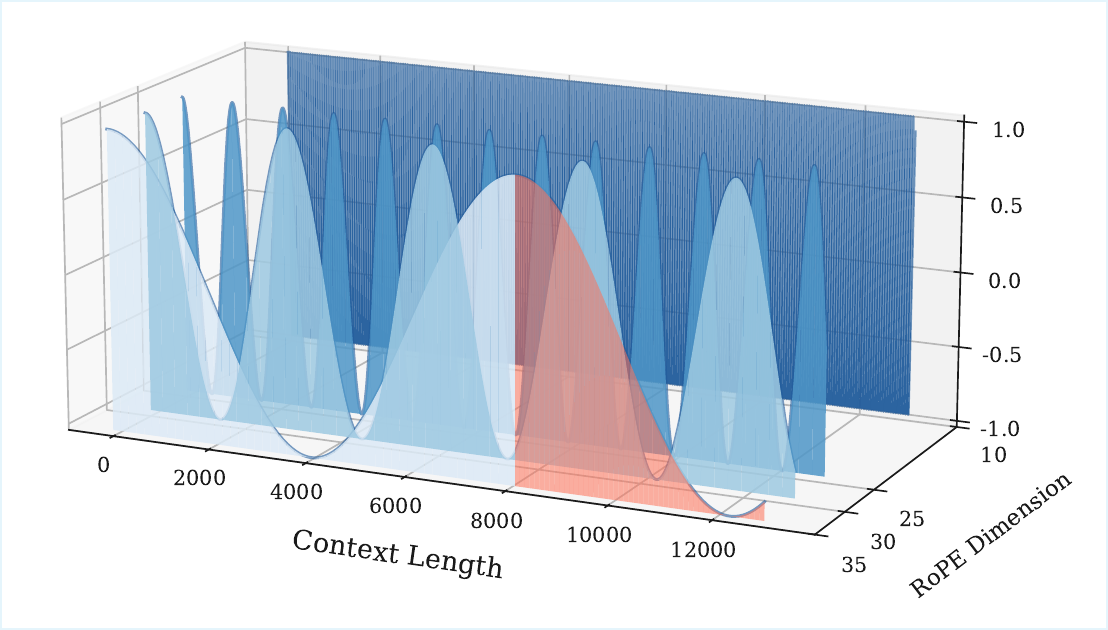}
        \caption{RoPE Frequencies and OOD Issue.}
    \end{subfigure}
    % \hfill
    \begin{subfigure}[b]{0.4\textwidth}
        \centering
        \includegraphics[width=0.9\textwidth]{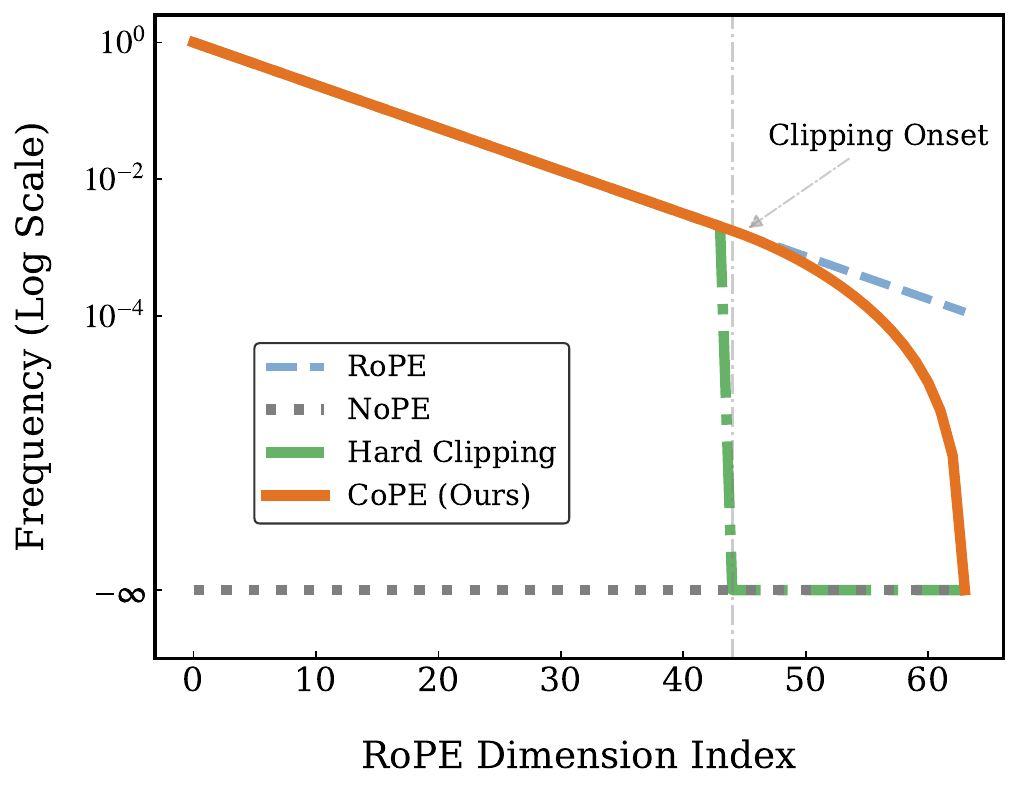}
        \vspace{3pt}
        \caption{Spectral Comparison.}
        \label{fig:2_b}
    \end{subfigure}
    \vspace{7pt}
    \caption{\textbf{(a) Visualization of RoPE frequencies.} Low-frequency components in higher dimensions possess longer periods. The region shaded in red marks where the period exceeds the pre-training context window, leading to OOD extrapolation. \textbf{(b) Spectral comparison.} Unlike RoPE which keeps unstable low frequencies (blue), or Hard Clipping which causes an abrupt cut-off and spectral leakage, CoPE implements a soft decay strategy starting from the clipping onset, simultaneously eliminating OOD outliers and refines semantic signals.}
    \label{fig:2}
\end{figure*}

\subsection{RoPE OOD Theory}
\label{sec:rope_ood}

\textbf{Background.} Recall that RoPE divides the query and key vectors into $2$-dimensional chunks and rotates each chunk at a frequency of $\theta_{i}=b^{-2i/d}, i\in\{0,1,\dots,d/2-1\}$, where $b$ is the base frequency and is usually set to $10,000$. Given the periodicity of sinusoidal functions, we know that for each chunk with frequency $\theta_i$, the corresponding period can be calculated as follows:
\begin{equation}
T_i = \frac{2\pi}{\theta_i}.
\end{equation}
Since $\theta_i$ decreases as the dimensional index $i$ increases, the low-frequency components in higher dimensions possess longer periods, potentially exceeding the pre-training context window. For example, the pre-training context window of Llama-3-8B \cite{grattafiori2024llama} is $8192$, while the period of the $35$-th chunk already slightly exceeds this length. Consequently, out of the $64$ chunks, the last $29$ low-frequency chunks fail to experience a single complete period during the pre-training stage, leading to severe OOD issues during extrapolation. In contrast, high-frequency components in lower dimensions complete multiple cycles during pre-training and remain well-behaved even in extrapolation.

\textbf{Critical Dimension in Extrapolation.} Based on the above spectrum analysis and prior work \cite{liu2024scaling, shang2025longrope}, we formally define the critical dimension in RoPE-based extrapolation as follows:

\begin{tcolorbox}[
  colback=gray!7!white,  
  colframe=black,  
  arc=0.8mm,                 
  boxrule=1pt,         
  left=2mm, right=2mm, top=1mm, bottom=1mm,
  enhanced,
  width=\linewidth,
]
\begin{definition}[Critical Dimensions in Extrapolation]
For LLMs pre-trained with context window $L_{pre}$, attention head dimension $d$, and base frequency $b$, only the first $d_{ct}$ dimensions perceive complete periodic patterns during pre-training:
\begin{equation}
    d_{ct} = 2\lceil\frac{d}{2}\log_{b}\frac{L_{pre}}{2\pi} \rceil.
\end{equation}
\vspace{-6pt}
\label{def:critical_dim}
\end{definition}
\end{tcolorbox}

As shown in Figure~\ref{fig:2}, for Llama-3-8B with $L_{pre}=8192$, $d=128$, $b=500,000$, the critical dimension is $70$, which corresponds to the $35$-th rotation chunk as discussed earlier.

\textbf{OOD Mitigation Methods.} To mitigate the OOD behavior of RoPE beyond the critical dimension, several methods have been proposed to scale the frequencies $\theta_i$ so that extended contexts are mapped back to the original position range. For ease of notation, we denote the target context length as $L_t$ and the scaling factor for each frequency $\theta_i$ as $s_i$. Given the scaling factor, the scaled frequency can be calculated as:
\begin{equation}
    \theta^{\prime}_{i} = \frac{\theta_i}{s_i} = \frac{1}{s_i \times b^{2i/d}}.
\end{equation}
Representative works include PI \cite{chen2023extending}, NTK \cite{NTK}, YaRN \cite{peng2024yarn}, and LongRoPE \cite{ding2024longrope, shang2025longrope}. \textit{PI} applies a uniform scaling factor across all RoPE frequencies, i.e., $s_i=\frac{L_t}{L_{pre}}$. While easy to implement, this approach equally stretches all dimensions without considering the distinct behaviors of high- and low-frequency components of RoPE during extrapolation. As a result, it compresses high-frequency components, leading to a loss of local positional resolution. Inspired by the Neural Tangent Kernel (NTK) theory \cite{tancik2020fourier}, which states that neural networks have difficulties learning high-frequency features, \textit{NTK} proposes to scale high frequencies less and low frequencies more with the scaling factor $s_i = (\frac{L_t}{L_{pre}})^{2i/(d-2)}$, effectively alleviating the loss of high-frequency information. Building on NTK, \textit{YaRN} further partitions the frequencies into three groups and applies the following strategy: no scaling for high-frequency components ($s_i=1$), PI-style scaling for low-frequency components ($s_i=\frac{L_t}{L_{pre}}$), and linear interpolation between $1$ and $\frac{L_t}{L_{pre}}$ for intermediate frequencies. \textit{LongRoPE} adopts a perplexity-guided search-based method to estimate the optimal scaling factor $s_i$ for each frequency.

\begin{tcolorbox}[
  colback=blue!5!white,   
  colframe=black,  
  arc=0.8mm,                 
  boxrule=1pt,         
  left=2mm, right=2mm, top=1mm, bottom=1mm,
  enhanced,
  width=\linewidth,
]
\textbf{Takeaway 1.} OOD mitigation methods address extrapolation by interpolating low-frequency components, while minimally perturbing high frequencies. The primary distinction among these methods lies in their choice of per-frequency scaling factors.
\end{tcolorbox}

\begin{figure}[t]
    \centering
    \includegraphics[width=0.8\linewidth]{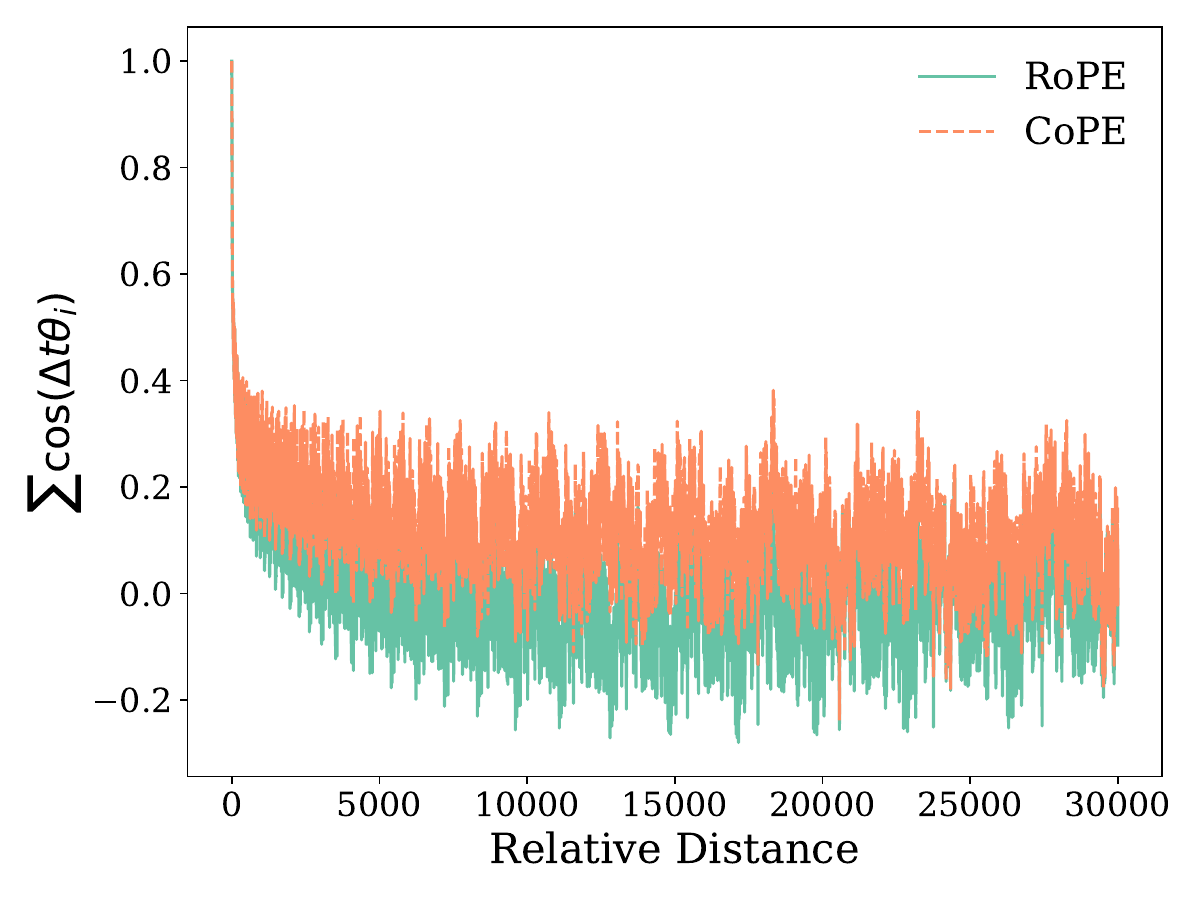}
    \caption{\textbf{Long-term decay of semantic attention.} As relative distance increases, the model’s ability to prefer semantically similar tokens over random ones diminishes. Applying soft clipping to the low-frequency components (CoPE) effectively alleviates this decay, preserving semantic information over long contexts.}
    % \vspace{-2em}
    \label{fig:semantic_decay}
\end{figure}

\subsection{RoPE Semantic Modeling}
\label{sec:rope_semantic}

\textbf{Background.} When RoPE was originally proposed \cite{su2024roformer}, it introduced an important inductive bias known as \textit{\textbf{long-term decay}}: the upper bound of the attention score between two tokens decreases as their relative distance increases. This property encourages each token to attend more to its neighbors. However, Men et al. \yrcite{men2024base} observe that an undesirable decay property also exists: the ability to attend more to semantically similar tokens than random tokens also decays as the relative distance increases. Following Men et al. \yrcite{men2024base}, we denote this property as \textit{\textbf{long-term decay of semantic attention}} and formalize it as follows:
\begin{tcolorbox}[
  colback=gray!7!white,  
  colframe=black,  
  arc=0.8mm,                 
  boxrule=1pt,         
  left=2mm, right=2mm, top=1mm, bottom=1mm,
  enhanced,
  width=\linewidth,
]
\begin{theorem}[Long-term Decay of Semantic Attention]
Assume query $\mathbf{q} \in \mathbb{R}^d$ and key $\mathbf{k} \in \mathbb{R}^d$ have distance $\Delta t$ and \text{i.i.d.} components with standard deviation $\sigma$. Let $\mathbf{k}^{\prime}=\mathbf{q}+\mathbf{\epsilon}$ denote a similar key to the query, where $\mathbf{\epsilon}$ is a zero-mean perturbation. Then, we have:
\begin{equation}
    \mathbb{E}_{\mathbf{q},\mathbf{k}, \epsilon}[\mathbf{q}^{\top}\mathbf{R}_{\Delta t}\mathbf{k}^{\prime}-\mathbf{q}^{\top}\mathbf{R}_{\Delta t}\mathbf{k}] = {2\sigma^2} \sum_{i=0}^{d/2-1} \cos(\Delta t \theta_i),
\end{equation}
\label{eq:semantic_attn}
where $\sum_{i=0}^{d/2-1} \cos(\Delta t \theta_i)$ decays as $\Delta t$ increases.
\label{theorem:semantic_attention}
\end{theorem}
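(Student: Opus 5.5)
The identity follows from linearity of expectation once the difference is rewritten as $\mathbf{q}^{\top}\mathbf{R}_{\Delta t}\mathbf{k}^{\prime}-\mathbf{q}^{\top}\mathbf{R}_{\Delta t}\mathbf{k} = \mathbf{q}^{\top}\mathbf{R}_{\Delta t}\mathbf{q} + \mathbf{q}^{\top}\mathbf{R}_{\Delta t}\boldsymbol{\epsilon} - \mathbf{q}^{\top}\mathbf{R}_{\Delta t}\mathbf{k}$. I will assume, as in Men et al., that $\mathbf{q}$, $\mathbf{k}$ and $\boldsymbol{\epsilon}$ are mutually independent, and that $\mathbf{q}$ and $\mathbf{k}$ have mean $\mu$. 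The statement leaves the mean implicit; a common mean is the natural reading.

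\textbf{Cross terms.} The term $\mathbf{q}^{\top}\mathbf{R}_{\Delta t}\boldsymbol{\epsilon}$ has zero expectation because $\boldsymbol{\epsilon}$ is zero-mean and independent of $\mathbf{q}$. For the random key, independence gives $\mathbb{E}[\mathbf{q}^{\top}\mathbf{R}_{\Delta t}\mathbf{k}] = \mathbb{E}[\mathbf{q}]^{\top}\mathbf{R}_{\Delta t}\mathbb{E}[\mathbf{k}] = \mu^2\,\mathbf{1}^{\top}\mathbf{R}_{\Delta t}\mathbf{1}$.

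\textbf{Quadratic term.} By the block-diagonal form \eqref{eq:RoPE}, the quadratic form splits over the chunks. For chunk $i$ with entries $(q_{2i},q_{2i+1})$,
\[
(q_{2i},q_{2i+1})\begin{pmatrix}\cos\phi&-\sin\phi\\ \sin\phi&\cos\phi\end{pmatrix}(q_{2i},q_{2i+1})^{\top}=(q_{2i}^2+q_{2i+1}^2)\cos\phi,
\]
where $\phi=\Delta t\,\theta_i$; the antisymmetric $\sin$ part cancels.

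\textbf{Expectations.} Since $\mathbb{E}[q_j^2]=\sigma^2+\mu^2$, the quadratic term has expectation $2(\sigma^2+\mu^2)\sum_i\cos(\Delta t\theta_i)$. The random-key term is $\mu^2\,\mathbf{1}^{\top}\mathbf{R}_{\Delta t}\mathbf{1}$, which also splits over chunks: $(1,1)\begin{pmatrix}\cos\phi&-\sin\phi\\ \sin\phi&\cos\phi\end{pmatrix}(1,1)^{\top}=2\cos\phi$. So it equals $2\mu^2\sum_i\cos(\Delta t\theta_i)$. Subtracting, the $\mu^2$ parts cancel and we obtain $2\sigma^2\sum_{i=0}^{d/2-1}\cos(\Delta t\theta_i)$, as claimed.

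\textbf{Decay claim.} This is the step I expect to be the main obstacle, because it is not literally monotone: the sum oscillates. I would argue it in the same sense as Su et al.'s long-term decay. At $\Delta t=0$ the sum equals its maximum $d/2$. As $\Delta t$ grows, more and more chunks, taken from high frequency downward, have $\Delta t\theta_i$ past $\pi/2$, so their cosines oscillate and largely cancel. The argument would run in three steps:
\begin{itemize}
\item Treat $\theta_i=b^{-2i/d}$ as a geometric grid.
\item Approximate the sum by $\frac{d}{2\ln b}\int_{\Delta t/b}^{\Delta t}\frac{\cos u}{u}\,du$, via the substitution $u=\Delta t\,b^{-x}$.
\item Note that this is $\frac{d}{2\ln b}\big(\mathrm{Ci}(\Delta t)-\mathrm{Ci}(\Delta t/b)\big)$, and that $\mathrm{Ci}(\Delta t)\to0$ while $-\mathrm{Ci}(\Delta t/b)\approx -\ln(\Delta t/b)-\gamma$ decreases in $\Delta t$ for $\Delta t\ll b$.
\end{itemize}
This yields an envelope that decays roughly logarithmically. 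In the paper I would present this heuristically, supported by the numerical curve in Figure~\ref{fig:semantic_decay}, rather than as a strict monotonicity statement.
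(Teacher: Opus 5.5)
Your derivation of the identity is correct and is essentially the paper's proof. It uses the same split into $\mathbb{E}[\mathbf{q}^{\top}\mathbf{R}_{\Delta t}\mathbf{q}]$, a vanishing $\epsilon$ cross term, and $\mu^2\mathbf{1}^{\top}\mathbf{R}_{\Delta t}\mathbf{1}$, followed by the same per-chunk evaluation and cancellation of the $\mu^2$ parts; you only make explicit the independence and common-mean assumptions that the paper uses tacitly. For the decay clause, the paper likewise concedes the sum is oscillatory and non-monotone and simply points to Figure~\ref{fig:semantic_decay}, so your $\mathrm{Ci}$-based continuum envelope is a heuristic addition beyond the paper rather than a different route.
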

\end{tcolorbox}

The proof is provided in Appendix~\ref{proof:semantic_decay}. Note that the term $\sum_{i=0}^{d/2-1} \cos(\Delta t \theta_i)$ should ideally be greater than zero to ensure more attention is paid to similar tokens than random ones. However, this term does decrease as $\Delta t$ increases, as shown in Figure~\ref{fig:semantic_decay}. Given this observation, Men et al. \yrcite{men2024base} propose to use a higher base frequency $b$, which in turn decreases $\theta_i=b^{-2i/d}$ and alleviates this undesirable decay. Similarly, the ABF technique \cite{xiong-etal-2024-effective} arrives at the same higher base frequency strategy, claiming that increasing the base frequency reduces the general long-term decay of RoPE and improves long context modeling. Given its simplicity and effectiveness, the higher base frequency strategy has been widely adopted in long context training \cite{grattafiori2024llama, yang2025qwen3}. More recently, several work has analyzed how different RoPE frequencies influence attention patterns, concluding that low-frequency components primarily carry semantic information \cite{barbero2025round, jin2025massive}, as they are the most invariant to token relative distance.
\begin{tcolorbox}[
  colback=blue!5!white,   
  colframe=black,  
  arc=0.8mm,                 
  boxrule=1pt,         
  left=2mm, right=2mm, top=1mm, bottom=1mm,
  enhanced,
  width=\linewidth,
]
\textbf{Takeaway 2.} RoPE secretly induces a long-term decay of semantic attention that is primarily governed by the low-frequency components, revealing them as an unreliable semantic channel. 
\end{tcolorbox}

\subsection{All Roads Lead to Low-Frequency Components}

Our analysis above reveals a unifying insight: both \textit{OOD extrapolation} and \textit{long-term decay of semantic attention} stem from the same root cause: \textbf{\textit{the suboptimal behavior
of low-frequency components in the extrapolation regime.}} Specifically, from the OOD perspective, low-frequency components possess periods exceeding the pre-training context window, resulting in OOD extrapolation. Meanwhile, from the semantic modeling perspective, low frequencies serve as the semantic channel that distinguishes similar tokens from random ones, yet this ability decays as context length increases. Our unified perspective suggests a simple yet effective design principle: stabilizing the behavior of low-frequency components is sufficient to mitigate OOD extrapolation and preserve long-range semantic attention.

\begin{tcolorbox}[
  colback=blue!5!white,   
  colframe=black,  
  arc=0.8mm,                 
  boxrule=1pt,         
  left=2mm, right=2mm, top=1mm, bottom=1mm,
  enhanced,
  width=\linewidth,
]
\textbf{Takeaway 3.} 
OOD extrapolation and long-term semantic decay are two manifestations of the \textbf{same} underlying issue: the suboptimal behavior of low-frequency components beyond the pre-training regime.
\end{tcolorbox}

\section{CoPE: Clipped Rotary Position Embedding}
Motivated by our analysis in Section~\ref{sec:analysis}, we propose Clipped Rotary Position Embedding (\textbf{CoPE}), a simple yet effective method that softly clips the low-frequency components of RoPE, as illustrated in Figure~\ref{fig:2_b}. CoPE not only eliminates OOD outliers and refines semantic signals, but also prevents severe spectrum leakage induced by hard clipping, thereby scaling favorably with increased context window.

\subsection{Spectral Analysis}

To stabilize low-frequency components, a straightforward approach is to directly set them to zero, i.e., hard clipping. For example, Babero et al. \yrcite{barbero2025round} identify the low frequencies as the semantic channel and propose to stabilize them by clipping the lowest $25\%$ or $75\%$ frequencies, resulting in lower validation perplexity on a 2B-scale model with 8k context length. However, hard clipping introduces an abrupt spectral cutoff, which can distort the remaining frequency components and undermine the stability of positional information, particularly in long-context scenarios. To elaborate, we first reframe the attention mechanism with RoPE through the lens of Non-Uniform Discrete Fourier Transform (NUDFT). As shown in Equation~\ref{eq:attention}, the dot-product attention between the $n$-th query vector $\mathbf{q}_n$ and the $m$-th key vector $\mathbf{k}_n$ is calculated as:

\begin{figure}[t]
    \centering
    \includegraphics[width=0.8\linewidth]{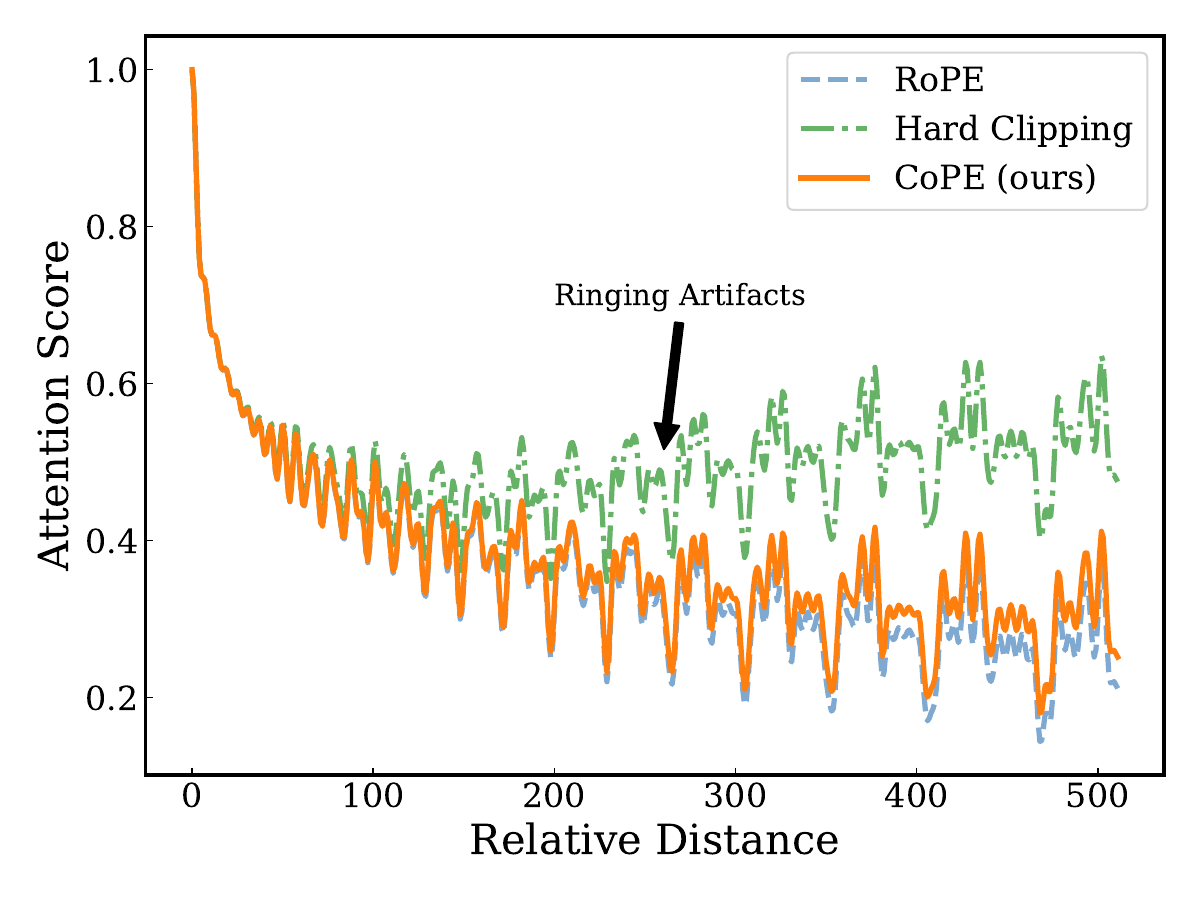}
    \caption{\textbf{Ringing artifacts caused by hard clipping.} Directly applying a hard clipping to the low-frequency components introduces an abrupt spectral cutoff, which causes spectral leakage and manifests as long-range oscillatory ringing in the attention signal (Gibbs phenomenon).}
    \label{fig:ring}
\end{figure}

\begin{equation}
  A_{n,m}=(\mathbf{R}_n\mathbf{q}_n)^{\top}(\mathbf{R}_m\mathbf{k}_m)=\mathbf{q}_n^{\top}\mathbf{R}_{m-n}\mathbf{k}_m,
\end{equation}

which can be further transformed into

\begin{equation}
    \small
    A(\tau) = \text{Re}\left[ \sum_{j=0}^{d/2-1} (\mathbf{q}^{(j)}_{n} \mathbf{k}^{(j)*}_{m}) e^{i \theta_j \tau} \right] = \sum_{j=0}^{d/2-1} A_j \cos(\theta_j \tau),
\end{equation}

where $\tau = m-n$ denotes the relative distance. This formulation reveals that the attention score computed with RoPE achieves an inverse NUDFT with frequency components $\theta_j = b^{-2j/d}, j \in [0,d/2)$. Now, we analyze the impact of hard clipping using a continuous approximation of $A(\tau)$ in the large-$d$ limit, which provides clearer theoretical insight.

\begin{tcolorbox}[
  colback=gray!7!white,  
  colframe=black,  
  arc=0.8mm,                 
  boxrule=1pt,         
  left=2mm, right=2mm, top=1mm, bottom=1mm,
  enhanced,
  width=\linewidth,
]
\begin{theorem}[Spectral Leakage from Hard Clipping]
Let $A(\tau)$ be the continuous attention score. Hard high-pass filter at cutoff $\theta_c$ yields a $\tilde{A}(\tau) = A(\tau) + E(\tau)$, where the error term is:
\begin{equation}
E(\tau) = - A(\tau) * \left( \frac{\theta_c}{\pi} \text{sinc}\left(\frac{\theta_c \tau}{\pi}\right) \right).
\end{equation}
The slow $O(1/\tau)$ decay of the sinc kernel introduces Gibbs oscillations, disrupting the general decay of $A(\tau)$ and inducing spurious long-range correlations.
\label{theorem:hard_clip}
\end{theorem}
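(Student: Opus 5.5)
We would prove the statement in the continuous (large-$d$) approximation. The attention score is viewed as an inverse Fourier transform of a spectral density supported on the RoPE frequencies. Concretely, we write
\[
A(\tau)=\frac{1}{2\pi}\int_{-\infty}^{\infty}\hat A(\theta)\,e^{i\theta\tau}\,d\theta,
\]
where $\hat A(\theta)$ is the even extension of the coefficients $A_j$ placed at the frequencies $\theta_j$. Evenness holds because $A(\tau)=\sum_j A_j\cos(\theta_j\tau)$ is real and even. Hard clipping of the low-frequency components below $\theta_c$ then multiplies the spectrum by the high-pass indicator
\[
H(\theta)=1-\mathbf{1}_{\{|\theta|<\theta_c\}}.
\]
So the clipped spectrum is $\hat A(\theta)-\hat A(\theta)\mathbf{1}_{\{|\theta|<\theta_c\}}$. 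Since inverse transformation is linear, this immediately gives $\tilde A=A+E$, with $E$ the inverse transform of $-\hat A\,\mathbf{1}_{\{|\theta|<\theta_c\}}$.

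The key step is the convolution theorem: the inverse transform of a product equals the convolution of the inverse transforms (under the $1/2\pi$ convention above, with no extra constant). We therefore compute the inverse transform of the low-pass rectangle:
\[
\frac{1}{2\pi}\int_{-\theta_c}^{\theta_c}e^{i\theta\tau}\,d\theta=\frac{\sin(\theta_c\tau)}{\pi\tau}=\frac{\theta_c}{\pi}\,\mathrm{sinc}\!\left(\frac{\theta_c\tau}{\pi}\right),
\]
using the normalized $\mathrm{sinc}(x)=\sin(\pi x)/(\pi x)$. This yields $E(\tau)=-A(\tau)*\frac{\theta_c}{\pi}\mathrm{sinc}(\theta_c\tau/\pi)$, as claimed. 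We would check the normalization constants carefully, since that is the only place the stated formula could be off.

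For the qualitative claims, we bound the kernel as $|\sin(\theta_c\tau)/(\pi\tau)|\le 1/(\pi|\tau|)$, which gives the $O(1/\tau)$ tail. The kernel also has alternating sign lobes of width $\pi/\theta_c$ and is not absolutely integrable. Hence $E(\tau)$ inherits oscillations with period of order $2\pi/\theta_c$ and only algebraic decay. Whenever $A$ itself decays faster, these oscillations dominate $\tilde A$ at large $|\tau|$: this is the Gibbs/ringing statement. We would illustrate it with a model where $A(\tau)$ decays quickly, for example one whose spectrum is smooth near $\theta_c$. The jump of $\hat A H$ at $\theta_c$, of size $\hat A(\theta_c)$, then produces by integration by parts a leading asymptotic term $\approx -\hat A(\theta_c)\sin(\theta_c\tau)/(\pi\tau)$. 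This is a long-range oscillatory correlation not present in $A$.

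The main obstacle is rigor in the continuum limit. The true spectrum is a finite sum of Dirac masses at non-uniform $\theta_j$, so the "convolution with $A(\tau)$" must be interpreted distributionally. We would handle this by stating the result for the limiting density $\hat A(\theta)\,d\theta$; under $\theta_j=b^{-2j/d}$ this density carries a Jacobian factor $1/\theta$ from the log-spacing. We would also assume $\hat A$ is of bounded variation near $\theta_c$ with $\hat A(\theta_c)\neq 0$. The integration-by-parts asymptotic then holds rigorously, and the discrete case is treated as an approximation.
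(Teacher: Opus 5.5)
Your proposal is correct and follows the paper's proof. Both write hard clipping as multiplication by $1-\mathbb{I}(|\omega|\le\theta_c)$, use linearity and the convolution theorem to get $E=-A*\mathcal{F}^{-1}[\mathbb{I}(|\omega|\le\theta_c)]$, and invert the rectangle to $\sin(\theta_c\tau)/(\pi\tau)=\frac{\theta_c}{\pi}\,\mathrm{sinc}(\theta_c\tau/\pi)$. Beyond that, your integration-by-parts asymptotic $E(\tau)\approx-\hat A(\theta_c)\sin(\theta_c\tau)/(\pi\tau)$ under $\hat A(\theta_c)\neq 0$ makes the ringing claim precise, whereas the paper only asserts the Gibbs and long-range behavior from the kernel's $O(1/\tau)$ tail; also, evenness of $A$ is not needed, since realness of $A$ (Hermitian symmetry of $\hat A$) suffices and only adds a phase to that asymptotic.
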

\end{tcolorbox}

The proof is provided in Appendix~\ref{proof:hard_clip}. Theorem~\ref{theorem:hard_clip} shows that the slowly decaying $O(1/\tau)$ envelope of the sinc kernel is a direct consequence of the sharp spectral discontinuity introduced by hard clipping. As a result, the attention scores exhibit Gibbs ringing, where oscillatory artifacts disrupt the general monotonicity of decay and cause spurious long-range correlations, as illustrated in Figure~\ref{fig:ring}.

\subsection{Soft Clipping Strategy}

To address the above challenge, our CoPE introduces a soft clipping strategy, which applies a smooth spectral taper (e.g., a cosine window) to the low frequencies. By Fourier duality, this soft clipping yields a rapidly decaying kernel in the time domain, suppressing unstable low-frequency components without inducing long-range spurious correlations.

Specifically, instead of applying a binary mask $\mathbf{1}_{\theta > \theta_c}$, we assign a scalar weight $w_j \in [0, 1]$ to each frequency component $\theta_j$. To minimize spectral discontinuity, we employ a cosine-decay taper. The weights $w_j$ are defined as a function of the frequency $\theta_j$:

\begin{equation}
\small
w(\theta_j) = 
\begin{cases} 
1, & \theta_j \ge \theta_{\text{start}} \\
\frac{1}{2} \left[ 1 + \cos\left( \pi \frac{\theta_{\text{start}} - \theta_j}{\theta_{\text{start}} - \theta_{\min}} \right) \right], & \theta_{\min} \le \theta_j < \theta_{\text{start}}
\end{cases},
\label{eq:soft_clip_weights} 
\end{equation}

where $\theta_{\text{start}}$ denotes the clipping onset and $\theta_{\min}$ is the lowest frequency. This strategy is highly practical as it allows for seamless integration into modern LLM frameworks. By simply modifying the initialization of the RoPE frequency, CoPE can be applied as a drop-in replacement without altering the model architecture. This ensures full compatibility with optimized inference kernels, such as FlashAttention \cite{dao2023flashattention2}, while maintaining standard inference speeds.

\section{Experiment}
\begin{table*}[t!]
\centering
\renewcommand{\arraystretch}{1.13}
\small
\caption{\textbf{Main results on HELMET benchmark across diverse real-world tasks.} Models are trained with 64k context length and evaluated up to 256k to assess length generalization. CoPE consistently outperforms RoPE and hard clipping, with performance gains scaling favorably with context length. The best results are \textbf{bold}, while ``--'' indicates unavailable benchmark data at that context length.}
\label{tab:main_results}
\begin{tabular}{>{\centering\arraybackslash}p{1.5cm}|>{\raggedright\arraybackslash}p{1.3cm}|*{6}{>{\centering\arraybackslash}p{1.51cm}}}
\toprule[1.1pt]
\multirow{2}{*}{\textbf{Task}} 
& \multirow{2}{*}{\textbf{Method}} 
& \multicolumn{6}{c}{\textbf{Context Length}} \\[4pt]
& & \textbf{8k} & \textbf{16k} & \textbf{32k} & \textbf{64k} & \textbf{128k} & \textbf{256k} \\
\midrule

% -------- Summ --------
\multirow{3}{*}{\textbf{Summ.}}
& RoPE & 29.18 & 28.46 & 21.76 & 11.10 & 6.31 & 9.06 \\
& HardClip & 25.68 & 25.70 & 18.55 & 6.93 & 9.33 & 8.60 \\
& \cellcolor{blue!6}{\textbf{CoPE}} & \cellcolor{blue!6}{\textbf{29.76}} & \cellcolor{blue!6}{\textbf{30.81}} & \cellcolor{blue!6}{\textbf{32.78}} & \cellcolor{blue!6}{\textbf{30.88}} & \cellcolor{blue!6}{\textbf{27.89}} & \cellcolor{blue!6}{\textbf{32.37}} \\
\midrule

% -------- QA --------
\multirow{3}{*}{\textbf{QA}}
& RoPE & 6.46 & 8.39 & 8.52 & 7.67 & 8.21 & 7.93 \\
& HardClip & 7.44 & 9.28 & 10.16 & 10.31 & 9.31 & 9.24 \\
& \cellcolor{blue!6}{\textbf{CoPE}} & \cellcolor{blue!6}{\textbf{13.10}} & \cellcolor{blue!6}{\textbf{16.89}} & \cellcolor{blue!6}{\textbf{21.02}} & \cellcolor{blue!6}{\textbf{15.07}} & \cellcolor{blue!6}{\textbf{18.23}} & \cellcolor{blue!6}{\textbf{19.06}} \\
\midrule

% -------- ICL --------
\multirow{3}{*}{\textbf{ICL}}
& RoPE & 74.60 & 80.20 & 83.40 & 85.50 & 82.10 & -- \\
& HardClip & 73.10 & 77.00 & 79.80 & 82.20 & 77.30 & -- \\
& \cellcolor{blue!6}{\textbf{CoPE}} & \cellcolor{blue!6}{\textbf{79.40}} & \cellcolor{blue!6}{\textbf{83.70}} & \cellcolor{blue!6}{\textbf{85.50}} & \cellcolor{blue!6}{\textbf{86.40}} & \cellcolor{blue!6}{\textbf{84.70}} & \cellcolor{blue!6}{--} \\
\midrule

% -------- Recall --------
\multirow{3}{*}{\textbf{Recall}}
& RoPE & 99.75 & 99.13 & 98.13 & 97.63 & 71.38 & 26.13 \\
& HardClip & 99.75 & 98.50 & 98.50 & 94.38 & 82.13 & 36.86 \\
& \cellcolor{blue!6}{\textbf{CoPE}}& \cellcolor{blue!6}{99.63} & \cellcolor{blue!6}{98.88} & \cellcolor{blue!6}{\textbf{99.00}} & \cellcolor{blue!6}{\textbf{97.88}} & \cellcolor{blue!6}{76.00} & \cellcolor{blue!6}{34.00} \\
\midrule

% -------- RAG --------
\multirow{3}{*}{\textbf{RAG}}
& RoPE & 68.38 & 67.44 & 66.67 & 62.78 & 53.44 & -- \\
& HardClip & 68.06 & 67.50 & 66.11 & 59.72 & 62.05 & -- \\
& \cellcolor{blue!6}{\textbf{CoPE}} & \cellcolor{blue!6}{\textbf{68.67}} & \cellcolor{blue!6}{\textbf{67.72}} & \cellcolor{blue!6}{\textbf{67.83}} & \cellcolor{blue!6}{\textbf{63.17}} & \cellcolor{blue!6}{56.78} & \cellcolor{blue!6}{--} \\
\midrule

% -------- Avg --------
\multirow{3}{*}{\textbf{Average}}
& RoPE & 55.74 & 56.86 & 55.70 & 52.94 & 44.29 & 14.37 \\
& HardClip & 54.81 & 55.60 & 54.62 & 50.71 & 48.02 & 18.23 \\
& \cellcolor{blue!6}{\textbf{CoPE}} & \cellcolor{blue!6}{\textbf{58.11}} & \cellcolor{blue!6}{\textbf{59.60}} & \cellcolor{blue!6}{\textbf{61.23}} & \cellcolor{blue!6}{\textbf{58.68}} & \cellcolor{blue!6}{\textbf{52.72}} & \cellcolor{blue!6}{\textbf{28.48}} \\

\bottomrule[1.1pt]
\end{tabular}
\renewcommand{\arraystretch}{1.0}
\end{table*}

In this section, we evaluate CoPE across various benchmarks to answer the following questions: \textbf{(1)} Does CoPE consistently outperform RoPE and the hard clipping strategy on real-world long context tasks? \textbf{(2)} Are synthetic benchmarks reliable proxies for real-world performance? \textbf{(3)} Can CoPE retain performance on short context benchmarks that assess general model capabilities? \textbf{(4)} How does the choice of clipping onset affect performance?

\subsection{Experimental Setups}

\textbf{Evaluation Benchmarks.} For long context evaluation, we primarily utilize the HELMET benchmark \cite{yen2025helmet}, which improves upon purely synthetic benchmarks (e.g., RULER \cite{hsieh2024ruler}) and benchmarks with limited real-world tasks (e.g., InfiniteBench \cite{zhang-etal-2024-bench}), providing a more robust and realistic assessment. HELMET includes both synthetic recall and a diverse set of real-world tasks, including retrieval-augmented generation (RAG), many-shot in-context learning (ICL), long-document QA, and summarization. We also report results on synthetic tasks from RULER and InfiniteBench. For standard short context benchmarks, we adopt MMLU \cite{hendryckstest2021}, MMLU-Pro \cite{wang2024mmlu}, GPQA \cite{rein2024gpqa}, BIG-Bench Hard \cite{suzgun2022challenging}, and GSM8K \cite{cobbe2021gsm8k}. For more detailed benchmark descriptions, please refer to Appendix~\ref{app:benchmark}.

\textbf{Long Context Training Stage.} We employ Llama-3-8B \cite{grattafiori2024llama} as the backbone model, which is pre-trained with an 8k context window. We extend the models' context length to 64k via continued pre-training on ProLong data ($20$B tokens) \cite{gao-etal-2025-train}, followed by SFT on UltraChat ($1$B tokens) \cite{ding2023enhancing}. Following Qwen3 and ProLong \cite{yang2025qwen3, gao-etal-2025-train}, we increase the base frequency from $5\times10^{5}$ to $1\times10^{7}$
using the ABF technique \cite{xiong-etal-2024-effective}.

\textbf{Baselines.} We compare CoPE with the widely-used RoPE \cite{su2024roformer} and a hard clipping strategy that directly sets some low frequencies to zero \cite{barbero2025round}.

\textbf{Implementation Details.} For both continued pre-training and SFT, we adopt a batch size of $256$ ($16$M tokens) and the AdamW optimizer \cite{loshchilov2017decoupled} with a weight decay of $0.1$ and $(\beta_1, \beta_2) = (0.9, 0.95)$. Both stages are trained for one epoch, differing only in their learning rate schedules. Specifically, continued pre-training uses an initial learning rate of $1\times10^{-5}$ with a $10\%$ warmup and cosine decay to $1\times10^{-6}$, while SFT uses an initial learning rate of $2\times10^{-5}$ with a $5\%$ warmup and cosine decay to $2\times10^{-6}$. The clipping onset is set to $44$ ($64$ frequencies in total). For evaluations beyond 64k, we leverage YaRN \cite{peng2024yarn} with a scaling factor of 4. The training process takes approximately $1996$ and $48$ GPU hours on machines equipped with H100-80GB GPUs, respectively.

\begin{table*}[t!]
\centering
\renewcommand{\arraystretch}{1.13}
\small
\caption{\textbf{Performance comparison on synthetic tasks} sampled from InfiniteBench and RULER, which provide limited insights into real-world performance.}
\label{tab:cope_long_context}
\begin{tabular}{>{\centering\arraybackslash}p{2.7cm}|>{\raggedright\arraybackslash}p{1.3cm}|*{6}{>{\centering\arraybackslash}p{1.3cm}}}
\toprule[1pt]

\multirow{3}{*}{\textbf{Task}} 
& \multirow{2}{*}{\textbf{Method}} 
& \multicolumn{6}{c}{\textbf{Context Length}} \\[4pt]
& & \textbf{8k} & \textbf{16k} & \textbf{32k} & \textbf{64k} & \textbf{128k} & \textbf{256k} \\
\midrule

\multirow{3}{*}{\textbf{RULER NIAH}}
& RoPE & 100.0 & 100.0 & 98.67 & 99.67 & 91.00 & 60.50 \\
& HardClip & 100.0 & 100.0 & 97.00 & 99.33 & 89.33 & 57.00 \\
& \cellcolor{blue!6}{\textbf{CoPE}} & \cellcolor{blue!6}{\textbf{100.0}} & \cellcolor{blue!6}{\textbf{100.0}} & \cellcolor{blue!6}{\textbf{98.67}} & \cellcolor{blue!6}{\textbf{99.67}} & \cellcolor{blue!6}{\textbf{94.00}} & \cellcolor{blue!6}{\textbf{78.50}}\\
\midrule

\multirow{3}{*}{\textbf{RULER MK}}
& RoPE & 100.0 & 100.0 & 99.00 & 96.00 & 54.33 & 27.00 \\
& HardClip & 100.0 & 100.0 & 98.00 & 89.33 & 63.33 & 17.67 \\
& \cellcolor{blue!6}{\textbf{CoPE}} & \cellcolor{blue!6}{\textbf{100.0}} & \cellcolor{blue!6}{\textbf{100.0}} & \cellcolor{blue!6}{98.67} & \cellcolor{blue!6}{\textbf{99.67}} & \cellcolor{blue!6}{59.67} & \cellcolor{blue!6}{\textbf{32.33}}\\
\midrule

\multirow{3}{*}{\textbf{InfBench KV}}
& RoPE & 6.20 & 12.80 & 24.60 & 24.20 & 16.40 & 16.40 \\
& HardClip & 6.20 & 12.80 & 24.80 & 11.20 & 21.40 & 21.40 \\
& \cellcolor{blue!6}{\textbf{CoPE}} & \cellcolor{blue!6}{\textbf{6.20}} & \cellcolor{blue!6}{\textbf{12.80}} & \cellcolor{blue!6}{\textbf{25.40}} & \cellcolor{blue!6}{\textbf{31.40}} & \cellcolor{blue!6}{19.40} & \cellcolor{blue!6}{19.40}\\
\midrule

\multirow{3}{*}{\textbf{InfBench Math Find}}
& RoPE & 37.14 & 35.00 & 36.86 & 33.42 & 35.14 & 35.14 \\
& HardClip & 34.86 & 35.67 & 35.71 & 26.29 & 34.57 & 34.57 \\
& \cellcolor{blue!6}{\textbf{CoPE}} & \cellcolor{blue!6}{35.43} & \cellcolor{blue!6}{\textbf{35.71}} & \cellcolor{blue!6}{36.00} & \cellcolor{blue!6}{\textbf{34.00}} & \cellcolor{blue!6}{\textbf{35.14}} & \cellcolor{blue!6}{\textbf{35.14}}\\
% \midrule

% \multirow{3}{*}{\textbf{InfBench ZH QA}}
% & RoPE & 10.60 & 11.62 & 13.28 & 11.91 & 12.24 & 10.54 \\
% & HardClip &  &  &  &  &  &  \\
% & \cellcolor{blue!6}{\textbf{CoPE}} & \cellcolor{blue!6}{\textbf{14.93}} & \cellcolor{blue!6}{\textbf{18.43}} & \cellcolor{blue!6}{\textbf{21.01}} & \cellcolor{blue!6}{\textbf{13.83}} & \cellcolor{blue!6}{\textbf{19.05}} & \cellcolor{blue!6}{\textbf{12.27}}\\

\bottomrule[1pt]
\end{tabular}
\label{tab:synthetic}
\renewcommand{\arraystretch}{1.0}
\end{table*}

\vspace{0.5em}
\subsection{Main Results}

We evaluate CoPE across a diverse set of tasks, covering synthetic recall, RAG, ICL, QA, and summarization. The results are detailed in Table~\ref{tab:main_results}.

\textbf{Performance on HELMET.} As shown in Table~\ref{tab:main_results}, CoPE consistently outperforms RoPE and HardClip across nearly all tasks and context lengths. Within the training range (64k), CoPE yields an average improvement of $10.84\%$ over RoPE, indicating that soft clipping does not compromise in-distribution performance. When extrapolated to 256k context, CoPE achieves approximately $2\times$ the performance of RoPE, demonstrating superior length generalization ability. In contrast, although the hard clipping strategy slightly improves performance at extreme context lengths (128k-256k), it exhibits noticeable degradation within the training range (8k-64k). This behavior empirically validates our theoretical analysis in Theorem~\ref{theorem:hard_clip}, which highlights that abrupt hard truncation would cause spectral leakage and introduce spurious correlations. Together, these results establish CoPE as a plug-and-play enhancement for vanilla RoPE in long context LLMs, effectively mitigating OOD outliers, refining long-range semantic signals, and preventing spectral leakage induced by hard clipping.

\textbf{Scalable Performance Gain of CoPE.} Beyond higher \textit{absolute} performance, CoPE exhibits performance gains that scale favorably with increasing context length. In particular, the average performance gain is roughly $4.54\%$ at shorter contexts (8-16k), increases to $10.39\%$ within the training range (32k–64k), and further scales to $58.61\%$ under long-context extrapolation (128k–256k). This trend shows that soft clipping effectively suppresses unstable low-frequency behaviors that become pronounced as the context grows.

% \subsection{Limitations of Synthetic Tasks}
% While synthetic recall tasks have been adopted for long context evaluation, we find that they provide limited insight into real-world performance. As shown in Table~\ref{tab:main_results} and Table~\ref{tab:synthetic}, synthetic recall scores either saturate within the training range or exhibit unstable fluctuations under length extrapolation, making them poor proxies for realistic task performance. This observation is consistent with prior work \cite{zhang-etal-2024-bench, gao-etal-2025-train, shang2025longrope} and motivates our use of the more realistic HELMET benchmark.

\vspace{0.5em}

\subsection{Limitations of Synthetic Tasks}
\label{sec:synthetic}

While synthetic recall tasks are widely adopted for long context evaluation, we find that they provide limited insights into real-world performance, as shown in Table~\ref{tab:synthetic}.

\textbf{Saturation Issue.} Many synthetic tasks quickly saturate within the training range, making them ineffective for distinguishing model capabilities. 
For example, RULER-NIAH and RULER-MK achieve near-perfect accuracy for all methods at 8k-64k context lengths, despite significant performance gaps on real-world tasks, as shown in Table~\ref{tab:main_results}. 

% Consequently, such tasks fail to reveal meaningful improvements in modeling capacity.

\textbf{Limited Discriminative Power.} Some synthetic tasks exhibit hardly distinguishable performance across methods by design. For example, on InfiniteBench KV, all methods achieve nearly identical accuracy at 8k-32k contexts, making the task uninformative for comparing model capabilities.

\textbf{Length Invariance.} Furthermore, some other synthetic tasks demonstrate insensitivity to context length. For instance, InfiniteBench Math Find is a variant of multiple numerical lookup and exhibits only minor performance differences across context lengths, i.e., maintaining $\sim$35\% accuracy from 8k to 256k context for all methods.

Overall, synthetic tasks either saturate early or fail to capture meaningful distinctions between models, rendering them poor proxies for real-world long context performance. This observation aligns with prior findings~\cite{gao-etal-2025-train} and motivates our adoption of the HELMET benchmark.

\begin{table}[t!]
\vspace{0.25em}
  \caption{{\textbf{Performance on standard benchmarks} that measure general model capabilities.} Despite clipped low frequencies, CoPE preserves performance and even yields slight gains.}
  \label{tab:short_context}
    \centering
    \resizebox{\linewidth}{!}{%
        \begin{tabular}{lccccc}
          \toprule[1pt]
          Method  & MMLU & MMLU-Pro & GPQA  & BBH & GSM8K \\
          \midrule
          RoPE  & 62.22 & 33.52 & 28.75 & 64.47 & 52.38 \\
          HardClip  & 62.35 & 33.95 & 29.67 & 64.48 & 52.59 \\
          \rowcolor{blue!6}
          \midrule
          \textbf{CoPE}   & \textbf{62.37} & \textbf{34.05} & 29.31 & \textbf{64.51} & 52.46 \\
          \bottomrule[1pt]
        \end{tabular}
    }
\vspace{-1.5em}
\end{table}

\subsection{Results on Standard Short Context Benchmarks}

To verify that CoPE’s soft clipping strategy does not compromise general model capabilities, we evaluate it on a suite of standard short context benchmarks. As shown in Table~\ref{tab:short_context}, CoPE preserves performance and even yields slight gains on all benchmarks, which serve as proxies for broad reasoning and knowledge. The fact that CoPE does not trade off these capabilities indicates that soft clipping primarily suppresses \emph{the suboptimal behavior of low-frequency components}, rather than erasing semantically useful signal. These results, together with CoPE’s consistent gains across context lengths (Table~\ref{tab:main_results}), support our central claim: \emph{soft} clipping is a drop-in enhancement of RoPE that delivers consistent performance gains across tasks and context lengths.

\begin{table}[t]
  \caption{\textbf{Ablation results on HELMET.} While CoPE remains robust to the choice of clipping onset, we find that preserving some stable low frequencies generally yields better performance. CoPE-29 denotes softly clipping the last 29 frequencies, whose periods are longer than the pre-training context window.}
  \label{tab:ablation}
    \centering
    \resizebox{\linewidth}{!}{%
        \begin{tabular}{lcccccc}
          \toprule[1pt]
          Method  & 8k & 16k & 32k & 64k & 128k & 256k \\
          \midrule
          \rowcolor{blue!6}
          \textbf{CoPE} & \textbf{58.11} & \textbf{59.60} & \textbf{61.23} & \textbf{58.68} & \textbf{52.72} & \textbf{28.48} \\
         \midrule
          RoPE  & 55.74  & 56.86  & 55.70  & 52.94  & 44.29  & 14.37 \\
          CoPE-29  & 55.92 & 57.15 & 58.02 & 56.28 & 49.71 & 21.76 \\
          CoPE-34  & 57.09 & 59.46 & 59.55 & 57.54 & 49.33 & 19.07 \\
          \bottomrule[1pt]
        \end{tabular}
    }
\vspace{-1em}
\end{table}

\subsection{Ablation Study}

To understand how the choice of clipping onset impacts performance, we conduct an ablation study by varying the number of frequencies that are softly clipped in CoPE. The results are summarized in Table~\ref{tab:ablation}. 

Specifically, we consider two variants, CoPE-29 and CoPE-34, which softly clip a larger portion of the low-frequency components compared to the default configuration (CoPE-20). In CoPE-29, all frequencies whose periods exceed the pre-training context window are clipped, while CoPE-34 further removes part of the moderately low-frequency band.

According to Table~\ref{tab:ablation}, we observe that: (1) CoPE remains robust to the choice of clipping onset, with all variants outperforming vanilla RoPE across different context lengths. (2) The default CoPE configuration, which clips $\sim75\%$ of the low frequencies, consistently yields the best performance, indicating that low-frequency suppression, while effective, should avoid being overly aggressive.

\section{Related Work}
RoPE is widely adopted in modern LLMs and is deeply coupled with their length generalization ability. To enable context extension, prior work has proposed various modifications to RoPE. In this work, we highlight that their underlying guiding principles can be generally categorized into two classes: \emph{OOD mitigation} and \emph{semantic modeling}.

\textbf{RoPE OOD Mitigation.} The low-frequency components in RoPE possess periods longer than the pre-training context window, which will lead to severe OOD issues during extrapolation. To mitigate this, a line of work has investigated different methods to scale RoPE frequencies so that extended contexts are mapped back to the original training range, including PI \cite{chen2023extending}, NTK \cite{NTK}, YaRN \cite{peng2024yarn}, and LongRoPE \cite{ding2024longrope, shang2025longrope}. As discussed in Section~\ref{sec:rope_ood}, these methods differ primarily in their choice of per-frequency scaling factors, and the key technique is to interpolate low frequencies while minimizing the impact on high frequencies, which have completed multiple cycles during pre-training.

\textbf{RoPE Semantic Modeling.} Meanwhile, another line of work has investigated how the semantic information is carried within RoPE. As discussed in Section~\ref{sec:rope_semantic}, Men et al. \yrcite{men2024base} observe that besides the general decay of activations, RoPE also introduces an undesirable decay property: the ability to attend more to semantically similar tokens than random ones decays as the relative distance increases, which we refer to as \emph{long-term decay of semantic attention}. To alleviate this decay, they propose a higher base frequency strategy, which is also introduced in the ABF technique \cite{xiong-etal-2024-effective}. More recently, several studies analyze the attention patterns within different RoPE frequencies, revealing that low-frequency components primarily carry semantic information, as they are the most invariant to token relative distance \cite{barbero2025round, jin2025massive}.

In our work, we unify these seemingly diverging objectives and argue that they stem from the same issue: \emph{the suboptimal behavior
of low-frequency components in the extrapolation regime.} This is inspired by the fact that the low-frequency components are responsible for OOD extrapolation, while simultaneously serving as an unreliable semantic channel whose discriminative power decays with increasing relative distance. Given this insight, we propose a minimalist and principled enhancement, termed CoPE, which softly clips the low-frequency components of RoPE to suppress OOD outliers and refine long-range semantic signals. Importantly, softly clipping prevents spectral leakage induced by hard frequency truncation \cite{barbero2025round}, which can introduce ringing artifacts and spurious correlations.

\section{Conclusion}
In this paper, we present a unified perspective on long context adaptations of RoPE. We first highlight that existing methods can be categorized into two paradigms: OOD mitigation and semantic modeling. Then, we point out that these two seemingly distinct objectives originate from the same issue: \emph{the suboptimal behavior of low-frequency components in the extrapolation regime.} Motivated by this insight, we introduce CoPE, a plug-and-play enhancement for RoPE that softly clips the low-frequency components. CoPE not only suppresses OOD outliers and refines long-range semantic signals, but also avoids spectral leakage induced by hard frequency truncation. Extensive experiments on a diverse set of real-world tasks demonstrate that CoPE consistently outperforms RoPE and the hard clipping strategy across context lengths of up to 256k, confirming its effectiveness and moving beyond prior perplexity-based metrics, synthetic recall benchmarks, and short context evaluation.

\section*{Impact Statement}

This paper presents work whose goal is to advance the field of Machine
Learning. There are many potential societal consequences of our work, none
which we feel must be specifically highlighted here.

% In the unusual situation where you want a paper to appear in the
% references without citing it in the main text, use \nocite
\nocite{li2025hope}

\bibliography{example_paper}
\bibliographystyle{icml2026}

%%%%%%%%%%%%%%%%%%%%%%%%%%%%%%%%%%%%%%%%%%%%%%%%%%%%%%%%%%%%%%%%%%%%%%%%%%%%%%%
%%%%%%%%%%%%%%%%%%%%%%%%%%%%%%%%%%%%%%%%%%%%%%%%%%%%%%%%%%%%%%%%%%%%%%%%%%%%%%%
% APPENDIX
%%%%%%%%%%%%%%%%%%%%%%%%%%%%%%%%%%%%%%%%%%%%%%%%%%%%%%%%%%%%%%%%%%%%%%%%%%%%%%%
%%%%%%%%%%%%%%%%%%%%%%%%%%%%%%%%%%%%%%%%%%%%%%%%%%%%%%%%%%%%%%%%%%%%%%%%%%%%%%%
\newpage
\appendix
\onecolumn
\section{Proofs}
In this section, we provide detailed proofs for the theoretical statements presented in this paper.

\subsection{Long-term Decay of Semantic Attention}
\label{proof:semantic_decay}

As discussed in Theorem~\ref{theorem:semantic_attention}, RoPE secretly induces a long-term decay of semantic attention, where the ability to attend more to semantically similar tokens than random ones decays as the relative distance increases. Here, we provide the derivation used in Equation~\ref{eq:semantic_attn}.

\begin{tcolorbox}[
  colback=gray!7!white,  
  colframe=black,  
  arc=0.8mm,                 
  boxrule=1pt,         
  left=2mm, right=2mm, top=1mm, bottom=1mm,
  enhanced,
  width=\linewidth,
]
\textbf{Theorem~\ref{theorem:semantic_attention}} \textbf{(}Long-term Decay of Semantic Attention\textbf{).} \textit{Assume query $\mathbf{q} \in \mathbb{R}^d$ and key $\mathbf{k} \in \mathbb{R}^d$ have distance $\Delta t$ and \text{i.i.d.} components with standard deviation $\sigma$. Let $\mathbf{k}^{\prime}=\mathbf{q}+\mathbf{\epsilon}$ denote a similar key to the query, where $\mathbf{\epsilon}$ is a zero-mean perturbation. Then, we have:}
\begin{equation}
    \mathbb{E}_{\mathbf{q},\mathbf{k}, \epsilon}[\mathbf{q}^{\top}\mathbf{R}_{\Delta t}\mathbf{k}^{\prime}-\mathbf{q}^{\top}\mathbf{R}_{\Delta t}\mathbf{k}] = {2\sigma^2} \sum_{i=0}^{d/2-1} \cos(\Delta t \theta_i),
\end{equation}
\textit{where $\sum_{i=0}^{d/2-1} \cos(\Delta t \theta_i)$ decays as $\Delta t$ increases.}
\end{tcolorbox}

\begin{proof}
\begin{equation}
\begin{aligned}
    \mathbb{E}_{\mathbf{q},\mathbf{k},\epsilon}[\mathbf{q}^{\top}\mathbf{R}_{\Delta t}\mathbf{k}'-\mathbf{q}^{\top}\mathbf{R}_{\Delta t}\mathbf{k}] &=\mathbb{E}_{\mathbf{q},\mathbf{k},\epsilon}[\mathbf{q}^{\top}\mathbf{R}_{\Delta t}\mathbf{(q+\epsilon)}-\mathbf{q}^{\top}\mathbf{R}_{\Delta t}\mathbf{k}] 
    \\[4pt]
    &=\mathbb{E}_{\mathbf{q,\epsilon}}[\mathbf{q}^{\top}\mathbf{R}_{\Delta t}(\mathbf{q}+\epsilon)] -\mathbb{E}_{\mathbf{q},\mathbf{k}}[\mathbf{q}^{\top}\mathbf{R}_{\Delta t}\mathbf{k}] 
    \\[4pt]
    &=\mathbb{E}_{\mathbf{q}}[\mathbf{q}^{\top}\mathbf{R}_{\Delta t}\mathbf{q}] + \mathbb{E}_{\mathbf{q,\epsilon}}[\mathbf{q}^{\top}\mathbf{R}_{\Delta t}\epsilon] -\mathbb{E}_{\mathbf{q},\mathbf{k}}[\mathbf{q}^{\top}\mathbf{R}_{\Delta t}\mathbf{k}]
    \\[4pt]
    &=\mathbb{E}_{\mathbf{q}}[\mathbf{q}^{\top}\mathbf{R}_{\Delta t}\mathbf{q}] -\mu^2 \mathbf{1}^{\top} \mathbf{R}_{\Delta t} \mathbf{1}
    \\[4pt]
    &=\mathbb{E}_{\mathbf{q}}[\sum_{i=0}^{d/2-1}(\mathbf{q}_{2i}^2+\mathbf{q}_{2i+1}^2)\cos(\Delta t\theta_i)]- \sum_{i=0}^{d/2-1}2\mu^2\cos(\Delta t\theta_i)
    \\[4pt]
    &=\sum_{i=0}^{d/2-1}2(\mu^2+\sigma^2)\cos(\Delta t\theta_i) - \sum_{i=0}^{d/2-1}2\mu^2\cos(\Delta t\theta_i)
    \\[4pt]
    &=2\sigma^2\sum_{i=0}^{d/2-1}\cos(\Delta t\theta_i),
\end{aligned}
\end{equation}

where $\mu$ denotes the mean of the i.i.d.\ components in $\mathbf q$ and $\mathbf k$. The term $\sum_{i=0}^{d/2-1}\cos(\Delta t\theta_i)$ is oscillatory and thus not monotonic in $\Delta t$, but exhibits a general decay as $\Delta t$ increases, as shown in Figure~\ref{fig:semantic_decay}.
\end{proof}

\subsection{Spectral Leakage from Hard Clipping}
\label{proof:hard_clip}

\begin{tcolorbox}[
  colback=gray!7!white,  
  colframe=black,  
  arc=0.8mm,                 
  boxrule=1pt,         
  left=2mm, right=2mm, top=1mm, bottom=1mm,
  enhanced,
  width=\linewidth,
]
\textbf{Theorem~\ref{theorem:hard_clip}} \textbf{(}Spectral Leakage from Hard Clipping\textbf{).} \textit{Let $A(\tau)$ be the continuous attention score. Hard high-pass filter at cutoff $\theta_c$ yields a $\tilde{A}(\tau) = A(\tau) + E(\tau)$, where the error term is:}
\begin{equation}
E(\tau) = - A(\tau) * \left( \frac{\theta_c}{\pi} \text{sinc}\left(\frac{\theta_c \tau}{\pi}\right) \right).
\end{equation}
\textit{The slow $O(1/\tau)$ decay of the sinc kernel introduces Gibbs oscillations, disrupting the general decay of $A(\tau)$ and inducing spurious long-range correlations.}
\end{tcolorbox}

\begin{proof}
Let $\mathcal{F}$ denote the Fourier transform and $\mathcal{F}^{-1}$ the inverse Fourier transform. We define the operation of hard clipping at cutoff frequency $\theta_c$ as applying an ideal high-pass filter, $H_{\text{high}}(\omega)$, in the frequency domain. This filter can be expressed as the complement of an ideal low-pass filter (rectangular window), $H_{\text{low}}(\omega)$:
\begin{equation}
    H_{\text{high}}(\omega) = 1 - H_{\text{low}}(\omega), \quad \text{where} \quad H_{\text{low}}(\omega) = \mathbb{I}(|\omega| \le \theta_c).
\end{equation}
Let $\hat{A}(\omega) = \mathcal{F}[A(\tau)]$ be the spectrum of the continuous attention score. The spectrum of the filtered signal, $\hat{\tilde{A}}(\omega)$, is given by the element-wise product:
\begin{equation}
\begin{aligned}
    \hat{\tilde{A}}(\omega) &= \hat{A}(\omega) \cdot H_{\text{high}}(\omega) \\
    &= \hat{A}(\omega) \cdot (1 - H_{\text{low}}(\omega)) \\
    &= \hat{A}(\omega) - \hat{A}(\omega) \cdot H_{\text{low}}(\omega).
\end{aligned}
\end{equation}
By the Convolution Theorem, multiplication in the frequency domain corresponds to convolution in the time domain. Applying the inverse Fourier transform $\mathcal{F}^{-1}$ to both sides yields:
\begin{equation}
    \tilde{A}(\tau) = A(\tau) - \left( A(\tau) * \mathcal{F}^{-1}[H_{\text{low}}(\omega)](\tau) \right).
\end{equation}
The inverse Fourier transform of the rectangular function $H_{\text{low}}(\omega)$ with cutoff $\theta_c$ is the normalized sinc function:
\begin{equation}
    \mathcal{F}^{-1}[H_{\text{low}}(\omega)](\tau) = \frac{\theta_c}{\pi} \text{sinc}\left(\frac{\theta_c \tau}{\pi}\right).
\end{equation}
Substituting this kernel back into the time-domain equation, we identify the error term $E(\tau) = \tilde{A}(\tau) - A(\tau)$ as:
\begin{equation}
    E(\tau) = - A(\tau) * \left( \frac{\theta_c}{\pi} \text{sinc}\left(\frac{\theta_c \tau}{\pi}\right) \right).
\end{equation}
This concludes the derivation. The impulse response of the ideal low-pass filter is a sinc function, which decays asymptotically as $O(1/\tau)$. This slow decay manifests as Gibbs oscillations (ringing artifacts) in the time domain, disrupting the general decay of $A(\tau)$ and inducing suprious long-range correlations. This negative effect is also illustrated in Figure~\ref{fig:ring}.
\end{proof}

\section{Further Experimental Details}
In this section, we provide further details of our experiments, including benchmark descriptions, additional results, and a case study.

\subsection{Benchmark Description}
\label{app:benchmark}

In this subsection, we provide detailed descriptions of the long context benchmarks we used in the experiments, including HELMET \cite{yen2025helmet}, RULER \cite{hsieh2024ruler}, and Infinite Bench \cite{zhang-etal-2024-bench}.

\begin{itemize}

\item \textbf{HELMET} is a comprehensive benchmark for evaluating long context LLMs on real-world tasks, improving upon purely synthetic benchmarks (e.g., RULER) and benchmarks with limited real-world tasks (e.g., Infinite Bench). Specifically, HELMET comprises summarization, long-document QA, many-shot in-context learning (ICL), synthetic recall, retrieval-augmented generation (RAG), generation with citations, and passage re-ranking. Following ProLong \cite{gao-etal-2025-train}, we select the five most representative tasks for evaluation.

\item \textbf{RULER} is a purely synthetic benchmark for long context evaluation, which expands upon the vanilla needle-in-a-haystack (NIAH) test to incorporate variations with diverse types and quantities of needles, resulting in a total of 13 synthetic tasks. However, as shown in recent work \cite{yen2025helmet, gao-etal-2025-train, zhang-etal-2024-bench, shang2025longrope} and our Section~\ref{sec:synthetic}, synthetic tasks either saturate quickly within the training range or provide limited signals for real-world performance, rendering them poor proxies for long context capabilities.

\item \textbf{Infinite Bench} is a benchmark designed to evaluate LLMs on extremely long-context understanding, consisting of both synthetic and real-world tasks with an average length of $\sim 200$k. Infinite Bench covers domains such as novel understanding, code execution, and mathematical calculation. Nevertheless, as discussed in Section~\ref{sec:synthetic}, we find that some tasks exhibit limited discriminative power among different methods (KV Retrieval) or insensitivity to context length (Math Find), which motivates our use of the more realistic HELMET benchmark.
\end{itemize}

\subsection{Additional Results}

We report the quantitative results of CoPE and RoPE on the RULER benchmark in Table~\ref{tab:ruler}. We observe that, except under extremely long contexts (256k), where CoPE achieves a substantial improvement (up to $+18.0$), most RULER tasks exhibit limited discriminative power between different methods. In contrast, on real-world tasks from the HELMET benchmark, such as RAG, in-context learning, and long-form summarization, CoPE consistently yields significant performance gains, as shown in Table~\ref{tab:main_results}. These results suggest that synthetic recall benchmarks may fail to fully reflect practical long context capabilities, highlighting the importance of evaluating different methods on realistic downstream tasks.

\begin{table}[t]
  \caption{\textbf{Performance comparison on the RULER benchmark.} The results are averaged across 13 tasks.}
  \label{tab:ruler}
    \centering
        \begin{tabular}{lcccccc}
          \toprule[1pt]
          Method  & 8k & 16k & 32k & 64k & 128k & 256k \\
          \midrule
          RoPE  & 80.52 & 82.33  & 82.11  & 76.93  & 61.19  & 28.86 \\
          \rowcolor{blue!6}
          \textbf{CoPE} & \textbf{81.50} & \textbf{82.84} & \textbf{82.75} & 76.71 & \textbf{61.95} & \textbf{46.86} \\
          \bottomrule[1pt]
        \end{tabular}
\vspace{-1em}
\end{table}

\subsection{Case Study}

Table~\ref{tab:case_study} presents some long-document QA examples with RoPE and CoPE. We observe that RoPE exhibits repetitive and less informative responses under long-context settings, often missing fine-grained details, whereas CoPE produces more coherent and detail-preserving answers.

\begin{table}[h]
\caption{\textbf{Long-document QA examples with RoPE and CoPE.}}
  \label{tab:case_study}
    \centering
        \begin{tabular}{p{7cm} | p{7cm}}
          \toprule[1pt]
          \textbf{Q:} Viola wonders about the fate of which animals in Central Park?
          \par
          \vspace{0.5em}
          \textbf{A:} The ducks in the lagoon.
          \par
          \vspace{0.5em}
          \textbf{Input Length:} 256k
          & 
          \textbf{CoPE:} Viola wonders about the fate of the \textcolor[HTML]{2E7D32}{ducks in the lagoon} in Central Park.
           \par
          \vspace{0.5em}
          \textbf{RoPE:} Viola wonders about the fate of the \textcolor[HTML]{C62828}{ducks [missing key details]} in Central Park. \textcolor[HTML]{C62828}{I'm not sure if he ever found out what happened to the ducks in the winter, but he was worried about it. [non-informative]}

          \\
          \midrule
           \textbf{Q:} What does Raul call his speakeasy?
          \par
          \vspace{0.5em}
          \textbf{A:} The Zebra Room.
        \par
          \vspace{0.5em}
          \textbf{Input Length:} 256k
          & 
          \textbf{CoPE:} \textcolor[HTML]{2E7D32}{The Zebra Room.}
           \par
          \vspace{0.5em}
          \textbf{RoPE:} \textcolor[HTML]{2E7D32}{The Zebra Room.} \textcolor[HTML]{C62828}{"Why did you have to paint a church?" "Why not?" "Because I promised to fix it up." "Why did you promise to fix it?" [non-informative]}
          \\
          \midrule
           \textbf{Q:} In Bellamy's vision of a perfect society, what is needed for artistic or literary success?
          \par
          \vspace{0.5em}
          \textbf{A:} Talent.
          \par
          \vspace{0.5em}
          \textbf{Input Length:} 256k
          & 
          \textbf{CoPE:} The ability to produce work that is of high quality and that is appreciated by the people. The nation provides opportunities for education and development of \textcolor[HTML]{2E7D32}{talent}, and the people vote on the acceptance of work in these...
           \par
          \vspace{0.5em}
          \textbf{RoPE:} The ability to produce work that is of high quality and that is appreciated by the people. \textcolor[HTML]{C62828}{The people are the sole judges of artistic and literary merit... [missing key details]}
          \\
          \bottomrule[1pt]
        \end{tabular}
\vspace{-1em}
\end{table}

% You can have as much text here as you want. The main body must be at most $8$
% pages long. For the final version, one more page can be added. If you want, you
% can use an appendix like this one.

% The $\mathtt{\backslash onecolumn}$ command above can be kept in place if you
% prefer a one-column appendix, or can be removed if you prefer a two-column
% appendix.  Apart from this possible change, the style (font size, spacing,
% margins, page numbering, etc.) should be kept the same as the main body.
%%%%%%%%%%%%%%%%%%%%%%%%%%%%%%%%%%%%%%%%%%%%%%%%%%%%%%%%%%%%%%%%%%%%%%%%%%%%%%%
%%%%%%%%%%%%%%%%%%%%%%%%%%%%%%%%%%%%%%%%%%%%%%%%%%%%%%%%%%%%%%%%%%%%%%%%%%%%%%%

\end{document}